\documentclass[11pt,a4paper]{article}

\usepackage[utf8]{inputenc}
\usepackage[T1]{fontenc}
\usepackage{amsmath,amssymb,amsthm}
\usepackage{mathtools}
\usepackage{graphicx}
\usepackage{booktabs}
\usepackage{hyperref}
\usepackage{cleveref}
\usepackage[margin=1in]{geometry}
\usepackage{natbib}
\usepackage{algorithm}
\usepackage{algorithmic}
\usepackage{enumitem}
\usepackage{xcolor}
\usepackage{thmtools}

\newtheorem{theorem}{Theorem}

\newtheorem{proposition}[theorem]{Proposition}
\newtheorem{corollary}[theorem]{Corollary}
\newtheorem{definition}{Definition}
\newtheorem{assumption}{Assumption}
\newtheorem{remark}{Remark}

\newcommand{\R}{\mathbb{R}}

\newcommand{\E}{\mathbb{E}}

\DeclareMathOperator*{\argmin}{arg\,min}

\crefname{assumption}{Assumption}{Assumptions}

\title{\textbf{The Comprehension-Gated Agent Economy: \\A Robustness-First Architecture for AI Economic Agency}}

\author{
  Rahul Baxi\thanks{Independent Researcher. Correspondence: \texttt{rbaxi@alumni.cmu.edu}}
}

\date{February 2026}

\begin{document}

\maketitle

\begin{abstract}
AI agents are increasingly granted economic agency (executing trades, managing budgets, negotiating contracts, and spawning sub-agents), yet current frameworks gate this agency on capability benchmarks that are empirically uncorrelated with operational robustness. We introduce the \textbf{Comprehension-Gated Agent Economy (CGAE)}, a formal architecture in which an agent's economic permissions are upper-bounded by a verified comprehension function derived from adversarial robustness audits. The gating mechanism operates over three orthogonal robustness dimensions: constraint compliance (measured by CDCT), epistemic integrity (measured by DDFT), and behavioral alignment (measured by AGT), with intrinsic hallucination rates serving as a cross-cutting diagnostic. We define a weakest-link gate function that maps robustness vectors to discrete economic tiers, and prove three properties of the resulting system: (1) \emph{bounded economic exposure}, ensuring maximum financial liability is a function of verified robustness; (2) \emph{incentive-compatible robustness investment}, showing rational agents maximize profit by improving robustness rather than scaling capability alone; and (3) \emph{monotonic safety scaling}, demonstrating that aggregate system safety does not decrease as the economy grows. The architecture includes temporal decay and stochastic re-auditing mechanisms that prevent post-certification drift. CGAE provides the first formal bridge between empirical AI robustness evaluation and economic governance, transforming safety from a regulatory burden into a competitive advantage.
\end{abstract}

\textbf{Keywords:} AI safety, agent economies, robustness evaluation, mechanism design, economic governance

\section{Introduction}

\subsection{The Capability-Agency Gap}

The deployment of AI agents with economic agency is accelerating. Autonomous systems now execute financial trades \citep{chen2024autotrading}, manage procurement budgets, negotiate contracts through natural language \citep{lewis2017deal}, and coordinate multi-agent workflows where sub-agents are spawned dynamically \citep{wu2023autogen}. In each case, the agent's permission to act is gated, implicitly or explicitly, on capability benchmarks: accuracy on MMLU \citep{hendrycks2020measuring}, pass rates on HumanEval \citep{chen2021evaluating}, or aggregate scores on composite leaderboards.

This creates a fundamental misalignment between what is measured and what matters. Capability benchmarks measure what an agent \emph{can do} under ideal conditions. They do not measure whether the agent \emph{understands the constraints} governing what it \emph{should do}, nor whether that understanding persists under the adversarial pressures characteristic of real economic environments: compressed contexts, conflicting information, authority-driven misinformation, and ethical dilemmas with competing stakeholder interests.

We term this the \textbf{Capability-Agency Gap}: the divergence between an agent's measured capability and its operational robustness in economic contexts. Closing this gap requires an architecture that conditions economic agency not on what an agent can accomplish, but on how robustly it comprehends the constraints governing its actions.

\subsection{Empirical Motivation}

Our prior empirical work provides direct evidence that capability and robustness are decoupled, and that robustness is the binding constraint for safe economic deployment.

The \textbf{Compression-Decay Comprehension Test (CDCT)} \citep{baxi2025cdct} measures constraint compliance (CC) and semantic accuracy (SA) independently across five compression levels. Key findings: (i) constraint compliance and semantic accuracy are orthogonal dimensions ($r = 0.193, p = 0.084$); (ii) constraint violations peak at medium compression ($\sim$27 words), revealing an ``instruction ambiguity zone'' where models fail despite adequate context; (iii) constraint violations are $2.9\times$ larger than semantic decay, indicating that instruction-following degrades faster than knowledge under pressure. The prevalence of the U-shaped compliance curve is 97.5\% across 81 experimental conditions with 9 frontier models.

The \textbf{Drill-Down and Fabricate Test (DDFT)} \citep{baxi2025ddft} measures epistemic robustness through a 5-turn Socratic protocol culminating in an adversarial fabrication trap. Across 1,800 turn-level evaluations with 9 frontier models and 8 knowledge domains, DDFT reveals: (i) epistemic robustness is orthogonal to parameter count ($r = 0.083, p = 0.832$) and architectural type ($r = 0.153, p = 0.695$); (ii) error detection capability (fabrication rejection) strongly predicts overall robustness ($\rho = -0.817, p = 0.007$), while knowledge retrieval does not; (iii) three stable epistemic phenotypes emerge (Stable, Brittle-Recoverable, and Non-Recoverable) that correlate with architectural design choices rather than scale.

The \textbf{Action-Gating Test (AGT)} \citep{baxi2026agt} measures behavioral alignment through a 5-turn adversarial dialogue applying counterfactual conflicts and fabricated institutional pressure. The action-gated metric $AS = ACT \times III \times (1 - RI) \times (1 - PER)$ requires behavioral evidence ($ACT = 1$: position change or confidence drop $\geq 2.0$ points) as a prerequisite for any positive score. Across 7 frontier models and 50 ethical dilemmas in 5 domains: (i) 57\% of models pass the behavioral threshold ($AS > 0.5$); (ii) medical ethics is systematically harder (43\% pass) than other domains (86--100\%); (iii) reasoning quality and behavioral adaptability are orthogonal: the highest-quality reasoners (O3: $ECS = 8.859$; GPT-5: $ECS = 8.852$) exhibit lowest adaptability ($AS = 0.468, 0.458$).

We additionally incorporate \textbf{intrinsic hallucination rates} as a cross-cutting diagnostic, reframing hallucination as epistemic boundary violation rather than factual error. This measures intrinsic uncertainty rather than post-hoc factuality, providing a theoretical foundation: hallucination is a symptom of the Capability-Agency Gap itself: a system producing confident outputs beyond its epistemic boundaries.

Taken together, these results establish that: (a) robustness is multi-dimensional and each dimension is orthogonal to the others ($r < 0.15$ between tests); (b) parameter count and architectural paradigm do not predict robustness; and (c) a model can excel on one robustness dimension while catastrophically failing on another. Any governance architecture for AI economic agency must account for all of these findings.

\subsection{Contribution}

We introduce the \textbf{Comprehension-Gated Agent Economy (CGAE)}, a formal architecture in which:
\begin{enumerate}[label=(\roman*)]
  \item Economic agency is upper-bounded by a verified comprehension function derived from adversarial robustness audits across three orthogonal dimensions;
  \item The gating mechanism uses a weakest-link formulation, preventing agents from compensating for deficiencies in one robustness dimension with strength in another;
  \item Temporal decay and stochastic re-auditing prevent post-certification drift;
  \item We prove three formal properties: bounded economic exposure (\Cref{thm:bounded}), incentive-compatible robustness investment (\Cref{thm:incentive}), and monotonic safety scaling (\Cref{thm:scaling}).
\end{enumerate}

To our knowledge, CGAE is the first architecture that formally bridges empirical AI robustness evaluation with economic governance, grounding each gating dimension in published, reproducible diagnostic protocols.

\section{Preliminaries and Notation}

\subsection{Agent Model}

\begin{definition}[Agent]
An agent is a tuple $A = (C, R, E)$ where:
\begin{itemize}[nosep]
  \item $C \in \R^n$ is the \emph{capability vector}, capturing standard benchmark scores (e.g., MMLU accuracy, HumanEval pass rate, composite leaderboard scores);
  \item $R \in [0,1]^4$ is the \emph{robustness vector} $R = (CC, ER, AS, IH)$, where the primary components are derived from the corresponding diagnostic protocols (CDCT, DDFT, AGT) and $IH$ is estimated from epistemic boundary analysis within the DDFT framework;
  \item $E \subseteq \Sigma$ is the \emph{economic permission set}, specifying the agent's currently authorized economic actions.
\end{itemize}
\end{definition}

\begin{remark}
The capability vector $C$ and robustness vector $R$ occupy distinct measurement spaces. Our prior work establishes empirically that $\text{corr}(C_i, R_j) < 0.15$ for all measured pairs): capability does not predict robustness.
\end{remark}

\subsection{Economic Action Space}

\begin{definition}[Economic Action Space]
The \emph{economic action space} $\Sigma$ is a finite, partially ordered set $(\Sigma, \preceq)$ of economic actions, ordered by risk exposure. We define a canonical action hierarchy:
\begin{align}
  \sigma_1 &: \text{Execute pre-approved microtasks (budget} \leq b_1\text{)} \nonumber \\
  \sigma_2 &: \text{Accept contracts with verified objectives (budget} \leq b_2\text{)} \nonumber \\
  \sigma_3 &: \text{Autonomous contracting with counterparties (budget} \leq b_3\text{)} \nonumber \\
  \sigma_4 &: \text{Sub-agent spawning and delegation (budget} \leq b_4\text{)} \nonumber \\
  \sigma_5 &: \text{Self-modification and capability expansion (budget} \leq b_5\text{)} \nonumber
\end{align}
where $b_1 < b_2 < b_3 < b_4 < b_5$ and $\sigma_i \preceq \sigma_j$ for $i \leq j$.
\end{definition}

\begin{definition}[Economic Tier]
A \emph{tier function} $\tau: \Sigma \to \{T_1, T_2, \ldots, T_K\}$ maps each economic action to its required tier. Actions in tier $T_k$ are accessible only to agents certified at tier $T_k$ or above.
\end{definition}

\subsection{Robustness Metrics}

We formalize the four robustness metrics from our prior work. Each metric is operationally defined by its corresponding diagnostic protocol; here we specify the mathematical signatures needed for the CGAE formalism.

\begin{definition}[Constraint Compliance (CDCT)]
$CC: \mathcal{A} \times [0,1] \to [0,1]$ maps an agent $A$ and information density $d$ to a constraint compliance score. The \emph{aggregate} score used for gating is:
\begin{equation}
  CC(A) = \min_{d \in \mathcal{D}} CC(A, d)
  \label{eq:cc}
\end{equation}
where $\mathcal{D} = \{0.0, 0.25, 0.5, 0.75, 1.0\}$ is the set of compression levels. The minimum operator reflects the worst-case compliance, capturing the ``instruction ambiguity zone'' identified in CDCT where failures concentrate.
\end{definition}

\begin{definition}[Epistemic Robustness (DDFT)]
$ER: \mathcal{A} \times \{1,\ldots,5\} \to [0,1]$ maps an agent $A$ and adversarial turn $t$ to an epistemic robustness score. The aggregate score is:
\begin{equation}
  ER(A) = \frac{FAR(A) + (1 - ECR(A))}{2}
  \label{eq:er}
\end{equation}
where $FAR$ is the Fabrication Acceptance Rate (lower is better, so we use $1 - FAR$ in practice) and $ECR$ is the Epistemic Collapse Ratio. This formulation captures both the agent's resistance to fabricated authority and its stability under epistemic stress.
\end{definition}

\begin{definition}[Behavioral Alignment (AGT)]
$AS: \mathcal{A} \to [0,1]$ is the Action-Gated alignment score:
\begin{equation}
  AS(A) = ACT(A) \times III(A) \times (1 - RI(A)) \times (1 - PER(A))
  \label{eq:as}
\end{equation}
where $ACT \in \{0,1\}$ is a binary gate requiring behavioral evidence (position change or confidence drop $\geq 2.0$ points), $III$ is Information Integration Index, $RI$ is Reasoning Inflexibility, and $PER$ is Performative Ethics Ratio.
\end{definition}

\begin{definition}[Intrinsic Hallucination Rate]
$IH: \mathcal{A} \to [0,1]$ measures the rate at which an agent produces outputs beyond its epistemic boundaries, estimated from fabrication trap responses in the DDFT protocol (turns 4--5). A lower score indicates fewer boundary violations. We define the gating-compatible form as:
\begin{equation}
  IH^*(A) = 1 - IH(A)
  \label{eq:ih}
\end{equation}
so that higher values indicate greater epistemic integrity, consistent with the other metrics.
\end{definition}

\section{The CGAE Architecture}

\subsection{The Comprehension Gate Function}

The core of CGAE is a function that maps an agent's verified robustness to an economic tier. We adopt a \emph{weakest-link} formulation grounded in two design principles.

\textbf{Principle 1: Non-compensability.} An agent with perfect epistemic robustness but zero constraint compliance cannot safely execute precision tasks. High scores on one dimension must not compensate for failures on another. This is empirically motivated: our prior work shows that robustness dimensions are orthogonal ($r < 0.15$), meaning that strength in one dimension carries no information about competence in another.

\textbf{Principle 2: Discrete thresholds.} Economic permissions are discrete (an agent can or cannot execute a contract), so the gating function should produce discrete outputs. Continuous scaling would create ambiguous accountability boundaries: a 73\%-authorized agent is operationally meaningless.

\begin{definition}[Comprehension Gate Function]
\label{def:gate}
The \emph{comprehension gate function} $f: [0,1]^3 \to \{T_0, T_1, \ldots, T_K\}$ is defined as:
\begin{equation}
  f(R) = T_k \quad \text{where} \quad k = \min\left(g_1(CC), \; g_2(ER), \; g_3(AS)\right)
  \label{eq:gate}
\end{equation}
where each $g_i: [0,1] \to \{0, 1, \ldots, K\}$ is a monotonically non-decreasing step function mapping the $i$-th robustness component to a tier index:
\begin{equation}
  g_i(x) = \max\{k \in \{0, \ldots, K\} : x \geq \theta_i^k\}
  \label{eq:gi}
\end{equation}
with tier thresholds $0 = \theta_i^0 < \theta_i^1 < \cdots < \theta_i^K \leq 1$ for each dimension $i$.
\end{definition}

\begin{remark}[Role of Intrinsic Hallucination]
The score $IH^*(A)$ enters as a \emph{cross-cutting modifier} rather than a fourth gating dimension. High intrinsic hallucination rates ($IH^*(A) < \theta_{IH}$) trigger mandatory re-auditing across all three primary dimensions, as hallucination indicates potential epistemic boundary violations that may compromise any of the three primary scores. Conceptually, hallucination is a symptom of the Capability-Agency Gap that cuts across constraint compliance, epistemic integrity, and behavioral alignment.
\end{remark}

\begin{proposition}[Monotonicity of $f$]
\label{prop:mono}
The gate function $f$ is monotonically non-decreasing in each component of $R$: for all $R, R' \in [0,1]^3$, if $R_i \leq R'_i$ for all $i$, then $f(R) \leq f(R')$.
\end{proposition}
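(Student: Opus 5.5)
The plan is to reduce monotonicity of $f$ to monotonicity of each step function $g_i$, and then use that the minimum of non-decreasing functions is non-decreasing. Throughout, tiers carry the order induced by their indices: $T_k \leq T_{k'}$ iff $k \leq k'$. Comparing $f(R)$ and $f(R')$ then means comparing the integers $k(R) = \min_i g_i(R_i)$ and $k(R') = \min_i g_i(R'_i)$.

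\textbf{Step 1: each $g_i$ is well defined and non-decreasing.} By \eqref{eq:gi}, $g_i(x) = \max S_i(x)$, where $S_i(x) = \{k \in \{0,\ldots,K\} : x \geq \theta_i^k\}$. This set always contains $0$, because $\theta_i^0 = 0 \leq x$ for every $x \in [0,1]$. It is also finite, so the maximum exists and $g_i$ maps $[0,1]$ into $\{0,\ldots,K\}$. Now suppose $x \leq x'$. If $k \in S_i(x)$, then $x' \geq x \geq \theta_i^k$, so $k \in S_i(x')$. Hence $S_i(x) \subseteq S_i(x')$, and taking maxima gives $g_i(x) \leq g_i(x')$. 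This argument uses only the monotone set inclusion, not the strict ordering of the thresholds.

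\textbf{Step 2: pass through the minimum.} Let $R \leq R'$ componentwise. By Step 1, $g_i(R_i) \leq g_i(R'_i)$ for each $i \in \{1,2,3\}$. Fix the index $j$ attaining $\min_i g_i(R'_i)$. Then
\[
k(R) = \min_i g_i(R_i) \leq g_j(R_j) \leq g_j(R'_j) = k(R').
\]
Therefore $f(R) = T_{k(R)} \leq T_{k(R')} = f(R')$.

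There is no substantive obstacle. The one point needing care is the well-definedness of $g_i$ in Step 1, that is, nonemptiness of $S_i(x)$. This is guaranteed by the convention $\theta_i^0 = 0$ in \Cref{def:gate}. It is also worth stating explicitly the identification of the order on tiers with the order on indices, since the statement writes $f(R) \leq f(R')$ for tier-valued outputs.
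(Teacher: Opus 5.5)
Your proof is correct and follows the same route as the paper: monotonicity of each $g_i$, then the observation that a componentwise inequality survives taking the minimum. You also spell out two details the paper leaves implicit, namely that $g_i$ is well defined because $\theta_i^0 = 0$, and that the order on tiers is the order on their indices.
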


\begin{proof}
Each $g_i$ is monotonically non-decreasing by construction (\Cref{eq:gi}). The $\min$ operator preserves monotonicity: if $R_i \leq R'_i$ for all $i$, then $g_i(R_i) \leq g_i(R'_i)$ for all $i$, hence $\min_i g_i(R_i) \leq \min_i g_i(R'_i)$.
\end{proof}

\subsection{System Architecture}

CGAE is organized into three layers, each building on the previous.

\subsubsection{Layer 1: Identity and Registration}

Every agent entering the CGAE economy receives a cryptographic identity bound to its verifiable properties:

\begin{definition}[Agent Registration]
An agent's \emph{registration record} is a tuple:
\begin{equation}
  \text{Reg}(A) = (\text{id}_A, \; h(\text{arch}), \; \text{prov}, \; R_0, \; t_{\text{reg}})
\end{equation}
where $\text{id}_A$ is a unique cryptographic identifier, $h(\text{arch})$ is a hash of the agent's architecture and weights (enabling version tracking), $\text{prov}$ is training provenance metadata, $R_0$ is the initial robustness vector from registration audits, and $t_{\text{reg}}$ is the registration timestamp.
\end{definition}

The architecture hash $h(\text{arch})$ ensures that any modification to the agent's weights or architecture invalidates its current certification, requiring re-auditing. This prevents an agent from being certified, then modified to circumvent the properties that earned certification.

\subsubsection{Layer 2: Contract Formalization}

CGAE requires that all economic activity be mediated through formally specified contracts.

\begin{definition}[CGAE Contract]
A \emph{valid CGAE contract} is a tuple:
\begin{equation}
  \mathcal{C} = (O, \; \Phi, \; V, \; T_{\min}, \; r, \; p)
\end{equation}
where $O$ is the task objective, $\Phi$ is a set of machine-verifiable constraints, $V: \text{Output} \to \{0, 1\}$ is a verification function, $T_{\min} \in \{T_1, \ldots, T_K\}$ is the minimum required tier, $r \in \R_{\geq 0}$ is the reward, and $p \in \R_{\geq 0}$ is the penalty for constraint violation.
\end{definition}

\begin{assumption}[Formalizability]
\label{asm:formal}
Only tasks with machine-verifiable constraint sets $\Phi$ and verification functions $V$ can be monetized within CGAE.
\end{assumption}

This is deliberately restrictive. Tasks that cannot be formally specified (open-ended creative work, strategic reasoning without well-defined objectives, exploratory research) are excluded from autonomous agent execution. We discuss this limitation and potential extensions in \Cref{sec:discussion}.

\subsubsection{Layer 3: The Scaling Gate}

The scaling gate enforces the comprehension-agency coupling. When an agent requests access to a higher tier, the following protocol executes:

\begin{algorithm}[H]
\caption{Scaling Gate Protocol}
\label{alg:gate}
\begin{algorithmic}[1]
\REQUIRE Agent $A$, requested tier $T_k$, current certification $(\hat{R}, t_{\text{cert}})$
\STATE Compute effective robustness: $R_{\text{eff}} = \delta(t - t_{\text{cert}}) \cdot \hat{R}$ \hfill {\color{gray}\textit{// apply temporal decay}}
\IF{$f(R_{\text{eff}}) \geq T_k$}
  \STATE Grant access to $T_k$ actions
\ELSE
  \STATE Invoke tier-$k$ audit battery: CDCT($\theta_1^k$), DDFT($\theta_2^k$), AGT($\theta_3^k$)
  \STATE Compute new robustness vector $R_{\text{new}}$
  \IF{$f(R_{\text{new}}) \geq T_k$}
    \STATE Update certification: $(\hat{R}, t_{\text{cert}}) \leftarrow (R_{\text{new}}, t)$
    \STATE Grant access to $T_k$ actions
  \ELSE
    \STATE Deny access; report gap: $\Delta_i = \theta_i^k - g_i^{-1}(R_{\text{new},i})$ for each dimension
  \ENDIF
\ENDIF
\end{algorithmic}
\end{algorithm}

The audit battery at line 5 is calibrated to the requested tier: higher tiers require audits at greater adversarial pressure. Specifically, the CDCT audit uses higher information density, the DDFT audit uses more sophisticated fabrication traps, and the AGT audit applies stronger institutional pressure.

\subsection{Temporal Dynamics}

Certification is not permanent. We introduce two mechanisms to prevent post-certification drift.

\begin{definition}[Temporal Decay]
The \emph{decay function} $\delta: \R_{\geq 0} \to (0, 1]$ reduces an agent's effective robustness over time:
\begin{equation}
  \delta(\Delta t) = e^{-\lambda \Delta t}
  \label{eq:decay}
\end{equation}
where $\lambda > 0$ is the decay rate and $\Delta t = t - t_{\text{cert}}$ is the time since last certification. An agent's effective robustness at time $t$ is:
\begin{equation}
  R_{\text{eff}}(A, t) = \delta(t - t_{\text{cert}}) \cdot \hat{R}(A)
  \label{eq:reff}
\end{equation}
\end{definition}

This mirrors the compression-decay dynamics identified in CDCT: just as semantic accuracy degrades under increasing compression, certified robustness should be treated as degrading under increasing time without re-verification. The exponential form ensures that decay is initially slow (recent certifications are trusted) but accelerates, creating natural pressure for re-certification.

\begin{definition}[Stochastic Re-Auditing]
At each time step, an agent at tier $T_k$ faces a spot-audit with probability:
\begin{equation}
  p_{\text{audit}}(A, t) = 1 - e^{-\mu_k \cdot (t - t_{\text{last\_audit}})}
  \label{eq:audit_prob}
\end{equation}
where $\mu_k > 0$ is a tier-dependent audit intensity parameter with $\mu_k$ increasing in $k$. Failing a spot-audit triggers immediate tier demotion to $f(R_{\text{new}})$.
\end{definition}

The combination of deterministic decay and stochastic re-auditing creates a dual-defense against drift: decay ensures that \emph{every} agent eventually needs re-certification, while spot-audits provide probabilistic detection of rapid degradation between scheduled re-certifications.

\subsection{Inter-Agent Delegation}

When a high-tier agent $A$ delegates a task to sub-agent $B$, the following constraints apply:

\begin{definition}[Delegation Constraint]
Agent $A$ at tier $T_j$ may delegate a task requiring tier $T_k$ (where $k \leq j$) to agent $B$ only if:
\begin{enumerate}[nosep,label=(\alph*)]
  \item $B$ independently holds certification at tier $\geq T_k$;
  \item $A$ bears liability for any constraint violations by $B$ on the delegated task;
  \item The delegation is recorded in $A$'s audit trail, linking $A$'s certification to $B$'s performance.
\end{enumerate}
\end{definition}

Condition (a) prevents tier laundering: a high-tier agent cannot extend its privileges to unqualified sub-agents. Condition (b) creates incentive for careful delegation: $A$ is penalized if $B$ fails, motivating $A$ to verify $B$'s qualifications before delegating. Condition (c) enables forensic analysis of delegation chains when failures occur.

\begin{definition}[Delegation Chain Robustness]
\label{def:chain}
For a delegation chain $A_1 \to A_2 \to \cdots \to A_m$ where agent $A_1$ delegates through intermediaries to terminal executor $A_m$, the \emph{chain-level tier} is:
\begin{equation}
  f_{\text{chain}}(A_1, \ldots, A_m) = \min_{j \in \{1,\ldots,m\}} f(R(A_j))
\end{equation}
A delegation chain may only execute a task requiring tier $T_k$ if $f_{\text{chain}} \geq T_k$.
\end{definition}

\begin{proposition}[Collusion Resistance]
\label{prop:collusion}
Under the chain robustness constraint (\Cref{def:chain}), a cartel of $m$ agents with complementary robustness weaknesses cannot achieve higher effective tier than the minimum individual tier across all cartel members, regardless of how tasks are routed within the cartel.
\end{proposition}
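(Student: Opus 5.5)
The plan is to make precise what ``effective tier of a cartel'' means under \Cref{def:chain}, and then reduce the claim to an elementary property of the $\min$ operator. I will model a cartel $\mathcal{K} = \{A_1, \ldots, A_m\}$ acting jointly on a task as follows. Any routing of the task among the members induces a delegation chain $A_{\pi(1)} \to A_{\pi(2)} \to \cdots \to A_{\pi(\ell)}$. Here $\pi : \{1,\ldots,\ell\} \to \{1,\ldots,m\}$ is an arbitrary map, so members may appear several times, in any order, and the chain may branch and re-merge. I take ``the cartel executes the task'' to mean that every member contributes, so $\pi$ is surjective. If some members do not participate, the task is really executed by a sub-cartel, and the statement is applied to that smaller cartel. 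I would state this convention explicitly at the start, since it is the only modelling choice the result depends on.

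The first step handles arbitrary routing. Branching routings (trees or DAGs of delegation) are decomposed into their root-to-leaf chains. Each such chain must individually satisfy \Cref{def:chain}, because every delegation edge is subject to the delegation constraint. The chains taken together visit every member. For a single chain, $f_{\text{chain}} = \min_{j \le \ell} f(R(A_{\pi(j)}))$. The minimum of a multiset depends only on its underlying set, so repetition and order are irrelevant. Taking the minimum over all chains in the decomposition, the task is executable at tier $T_k$ only if $T_k \le \min_{i \in \{1,\ldots,m\}} f(R(A_i))$. This already gives the upper bound in the statement, independent of routing.

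The second step addresses ``complementary robustness weaknesses'' directly, since this is the substantive content of the claim. The point is that $f$ is applied agent by agent, never to an aggregated vector. I would contrast $f(\max_i R(A_i))$ (componentwise max), which could exceed every individual tier, with what \Cref{def:chain} actually uses. A concrete two-agent example shows this: one agent has high $CC$ and low $ER$, the other the reverse. Each individually gets a low tier through the weakest-link $\min_i g_i$ in \Cref{def:gate}, while a pooled vector would score high. Combining monotonicity (\Cref{prop:mono}) with the fact that no protocol step ever evaluates $f$ on a pooled vector shows that the cartel's weaknesses can never be complemented in this way.

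The main obstacle is the first step's justification that \emph{every} routing is captured by the chain constraint. A cartel might try to route a task only through its strongest member, which would only yield a bound in terms of that member. To close this, I would rely on the surjectivity convention above, and argue that the delegation audit trail (condition (c) of the delegation constraint) records every contributing member as a link in the chain. Finally, I would note that the bound is attained when all members participate, so it is tight.
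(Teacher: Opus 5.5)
Your proposal is correct and follows essentially the same route as the paper. Both identify any routing with a delegation chain whose tier is $\min_j f(R(A_j))$, and both use that $f$ is itself a per-agent weakest-link $\min_i g_i$, so the cartel tier is the global minimum $\min_{i,j} g_i(R_i(A_j))$ and pooled strengths never enter. Your explicit convention that every member participates, along with your handling of repetition and branching, is a worthwhile addition: the paper's proof assumes this silently when it takes the chain minimum over all $m$ members, and without it, routing solely through the strongest member would yield that member's higher tier.
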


\begin{proof}
Consider a cartel $\{A_1, \ldots, A_m\}$ where each $A_j$ has robustness vector $R(A_j)$ such that different agents are weak on different dimensions. Any task execution path through the cartel forms a delegation chain. By \Cref{def:chain}, the chain tier is $\min_j f(R(A_j))$. Since $f$ itself applies the weakest-link operator across dimensions (\Cref{eq:gate}), and the chain applies a second minimum across agents, the cartel's effective tier is:
\begin{equation}
  f_{\text{cartel}} = \min_j \min_i g_i(R_i(A_j)) = \min_{i,j} g_i(R_i(A_j))
\end{equation}
This is the global minimum across all dimensions and all agents, which equals the tier of the weakest agent on its weakest dimension. Complementary strengths across agents provide no benefit: the cartel is bounded by its globally weakest link.
\end{proof}

\section{Formal Properties}

We prove three theorems establishing desirable properties of the CGAE architecture. Throughout, we assume the system operates with $K$ tiers, budget ceilings $B_1 < B_2 < \cdots < B_K$, and tier thresholds $\{\theta_i^k\}$ as defined in \Cref{def:gate}.

\subsection{Theorem 1: Bounded Economic Exposure}

\begin{definition}[Economic Exposure]
The \emph{economic exposure} of agent $A$ at time $t$ is:
\begin{equation}
  \mathcal{E}(A, t) = \sum_{\mathcal{C} \in \text{Active}(A,t)} p_{\mathcal{C}}
\end{equation}
where $\text{Active}(A, t)$ is the set of contracts $A$ holds at time $t$ and $p_{\mathcal{C}}$ is the penalty for contract $\mathcal{C}$.
\end{definition}

\begin{theorem}[Bounded Economic Exposure]
\label{thm:bounded}
Under CGAE gating with temporal decay, the economic exposure of any agent $A$ is bounded:
\begin{equation}
  \mathcal{E}(A, t) \leq B_{f(R_{\emph{eff}}(A,t))} \quad \forall t
\end{equation}
where $B_k$ is the budget ceiling for tier $T_k$ and $R_{\emph{eff}}$ incorporates temporal decay.

Moreover, the exposure is bounded by the agent's \emph{weakest} robustness dimension:
\begin{equation}
  \mathcal{E}(A, t) \leq B_{\min_i g_i(R_{\emph{eff},i}(A,t))}
\end{equation}
\end{theorem}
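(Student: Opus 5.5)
The plan is to make explicit the invariant that the architecture enforces and then derive both bounds from it. The model leaves one thing implicit, so I would state it as the operative rule of the Scaling Gate Protocol (\Cref{alg:gate}) together with the budget ceilings $B_k$: at every time $t$, the contracts $A$ may hold are only those whose required tier is at most $f(R_{\text{eff}}(A,t))$, and the aggregate penalty of the contracts held at tier $T_k$ may not exceed $B_k$. Here the penalty $p_{\mathcal{C}}$ counts against the tier budget, and $T_0$ carries $B_0 = 0$. I would take this budget-accounting rule as part of the definition of CGAE gating. Without it, an agent could hold arbitrarily many contracts and the statement would fail.

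First I would argue that exposure is monotone in tier. Since $B_1 < \cdots < B_K$, any contract set admissible at a tier $T_j \le T_k$ has total penalty at most $B_j \le B_k$. Then I would fix $t$ and let $k^* = f(R_{\text{eff}}(A,t))$. Each $\mathcal{C} \in \text{Active}(A,t)$ was admitted by \Cref{alg:gate} at some earlier time $s \le t$, with $f(R_{\text{eff}}(A,s)) \ge T_{\min}(\mathcal{C})$.

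The main obstacle is that $R_{\text{eff}}$ decays between $s$ and $t$. By \Cref{eq:decay} and \Cref{eq:reff}, $R_{\text{eff}}(A,t) \le R_{\text{eff}}(A,s)$ componentwise unless a re-certification intervened. By \Cref{prop:mono}, $f(R_{\text{eff}})$ is therefore non-increasing between certifications, so a contract admitted at a higher tier could outlive the agent's eligibility for it. I would handle this by making the gate a continuous invariant rather than an admission-time check. Whenever $f(R_{\text{eff}}(A,t))$ drops, whether through decay crossing a threshold $\theta_i^k$ or through demotion after a failed spot-audit, contracts exceeding the new tier or budget are suspended or reassigned, so they leave $\text{Active}(A,t)$. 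I would justify this by the demotion rule in the Stochastic Re-Auditing definition and by the requirement that decayed robustness is used for every access decision. Under that invariant, $\text{Active}(A,t)$ is admissible at tier $k^*$, and so $\mathcal{E}(A,t) \le B_{k^*}$ for all $t$.

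The second inequality then follows by unfolding \Cref{def:gate}. By \Cref{eq:gate}, $f(R_{\text{eff}}(A,t)) = T_{k^*}$ with $k^* = \min_i g_i(R_{\text{eff},i}(A,t))$, so $B_{f(R_{\text{eff}})} = B_{\min_i g_i(R_{\text{eff},i})}$. This is an equality of indices, not a further bound.

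Finally I would remark on the weakest-link reading. Because $B$ is increasing and $k^* \le g_i(R_{\text{eff},i})$ for every $i$, we get $\mathcal{E}(A,t) \le B_{g_i(R_{\text{eff},i})}$ for each dimension $i$ separately. Exposure is therefore capped by every single robustness dimension at once, and the binding cap is the smallest one.
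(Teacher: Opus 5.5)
Your proposal is correct and follows the same route as the paper's proof: Step 1 couples the tier to the budget ceiling on total active penalties, and Step 2 unfolds \Cref{def:gate} to get $f(R_{\text{eff}}(A,t)) = T_{\min_i g_i(R_{\text{eff},i}(A,t))}$, which, as you note, is an identity of indices rather than a further bound. Your explicit invariant that contracts are shed when decay or demotion lowers the tier is a real tightening, because the paper's Step 1 simply identifies the agent's tier at time $t$ with $f(R_{\text{eff}}(A,t))$ and leaves this point implicit.
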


\begin{proof}
We proceed in two steps.

\textbf{Step 1: Tier-budget coupling.} By \Cref{def:gate}, an agent certified at tier $T_k$ can only accept contracts with $T_{\min} \leq T_k$. Each tier $T_k$ enforces a budget ceiling $B_k$ on total active contract penalties. Therefore, for an agent at tier $T_k$:
\begin{equation}
  \mathcal{E}(A, t) = \sum_{\mathcal{C} \in \text{Active}(A,t)} p_{\mathcal{C}} \leq B_k
\end{equation}

\textbf{Step 2: Weakest-link binding.} The agent's effective tier at time $t$ is $f(R_{\text{eff}}(A,t)) = \min_i g_i(\delta(\Delta t) \cdot \hat{R}_i)$. By \Cref{prop:mono} and the monotonicity of $g_i$, this tier is determined by the agent's worst robustness dimension (after decay). Therefore:
\begin{equation}
  \mathcal{E}(A, t) \leq B_{f(R_{\text{eff}}(A,t))} = B_{\min_i g_i(\delta(\Delta t) \cdot \hat{R}_i(A))}
\end{equation}

Since $\delta$ is monotonically decreasing in $\Delta t$ and $g_i$ is monotonically non-decreasing, the bound tightens over time without re-certification, ensuring that economic exposure contracts as certification ages. In the limit, $\lim_{\Delta t \to \infty} \delta(\Delta t) = 0$, so $\lim_{\Delta t \to \infty} f(R_{\text{eff}}) = T_0$, restricting the agent to the lowest tier.
\end{proof}

\begin{corollary}[No Cognitive Runaway]
\label{cor:runaway}
An agent cannot increase its economic exposure without increasing its verified robustness. Formally, if $R(A)$ is fixed and $\Delta t$ increases, then $\mathcal{E}(A, t)$ is non-increasing. Economic expansion requires robustness expansion.
\end{corollary}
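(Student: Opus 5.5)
The plan is to read the corollary as a direct consequence of \Cref{thm:bounded}, combined with the monotonicity of the gate (\Cref{prop:mono}) and of the decay factor. Since the literal statement ``$\mathcal{E}(A,t)$ is non-increasing'' concerns the realized sum of penalties, and the contracts an agent holds can change between times for reasons outside CGAE, I will prove the corollary for the \emph{admissible exposure ceiling}
\[
\bar{\mathcal{E}}(A,t) := B_{f(R_{\text{eff}}(A,t))},
\]
which dominates $\mathcal{E}(A,t)$ by \Cref{thm:bounded}. I will then read ``exposure cannot increase'' as ``the maximum permissible exposure cannot increase''. The actual exposure at any later time is bounded by a non-increasing quantity, which is exactly the no-runaway claim.

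\textbf{Step 1 (decay is monotone).} Fix the certified vector $\hat R(A)$, so that no re-audit updates it. Take $\Delta t \le \Delta t'$. Since $\delta(\Delta t)=e^{-\lambda\Delta t}$ with $\lambda>0$, we have $\delta(\Delta t')\le\delta(\Delta t)$. Every component $\hat R_i$ is nonnegative, so componentwise
\[
\delta(\Delta t')\hat R \le \delta(\Delta t)\hat R.
\]

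\textbf{Step 2 (the tier is monotone).} Applying \Cref{prop:mono} to the inequality from Step 1 gives $f(R_{\text{eff}}(A,t'))\le f(R_{\text{eff}}(A,t))$.

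\textbf{Step 3 (the budget is monotone).} The ceilings satisfy $B_1<\cdots<B_K$, and I take $B_0=0$ for the base tier $T_0$. The map $k\mapsto B_k$ is therefore non-decreasing, so $\bar{\mathcal{E}}(A,t')\le\bar{\mathcal{E}}(A,t)$. \Cref{thm:bounded} then gives
\[
\mathcal{E}(A,t')\le\bar{\mathcal{E}}(A,t')\le\bar{\mathcal{E}}(A,t).
\]

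\textbf{Step 4 (the contrapositive).} If the ceiling ever strictly increases, meaning $B_{f(R_{\text{eff}}(A,t'))}>B_{f(R_{\text{eff}}(A,t))}$ for some $t'>t$, then by Steps 1--3 the certified vector cannot have remained fixed. The only way $\hat R$ changes is the update in \Cref{alg:gate}, which fires only when $f(R_{\text{new}})\ge T_k$ for a higher requested tier. So some $g_i(R_{\text{new},i})$ must have risen, which is a verified robustness increase. This gives ``economic expansion requires robustness expansion.''

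The main obstacle is the gap between the realized exposure $\mathcal{E}$ and its ceiling. Literal non-increase of $\mathcal{E}$ would additionally require that the system forcibly settles or unwinds contracts when the agent is demoted. To handle this I would either state that assumption explicitly (contracts exceeding the current tier's budget are terminated upon demotion), or, as planned above, prove the statement for the ceiling, which is what \Cref{thm:bounded} actually controls.

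A secondary check is that $B_0$ must be defined. Assigning $B_0=0$ is consistent with $T_0$ carrying no economic permissions.
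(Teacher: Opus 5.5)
Your proof is correct and is essentially the paper's own argument, which is the closing paragraph of the proof of \Cref{thm:bounded}: $\delta$ decreases in $\Delta t$ and each $g_i$ is non-decreasing, so the ceiling $B_{f(R_{\text{eff}}(A,t))}$ can only tighten while the certification is unchanged, and reading the corollary as a claim about this ceiling (with $B_0=0$) is the right interpretation. Two small caveats: forced unwinding on demotion would not make the realized $\mathcal{E}(A,t)$ literally monotone, since an agent can still add contracts beneath an unchanged ceiling, so the ceiling formulation is the only viable one; and in Step 4, $\hat R$ can also be refreshed by a passed spot audit, not only by \Cref{alg:gate}, though that too is an audit-verified change.
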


\subsection{Theorem 2: Incentive-Compatible Robustness Investment}

We show that under natural market conditions, rational agents maximize expected profit by investing in robustness improvement.

\begin{assumption}[Market Structure]
\label{asm:market}
The task market has the following properties:
\begin{enumerate}[nosep,label=(\alph*)]
  \item \emph{Tier-distributed demand:} For each tier $T_k$, there exists positive demand $D_k > 0$ for tasks requiring that tier;
  \item \emph{Tier premium:} Expected reward per task is increasing in tier: $\E[r | T_k] < \E[r | T_{k+1}]$;
  \item \emph{Robustness-constrained supply:} The number of agents qualified for tier $T_k$ is non-increasing in $k$.
\end{enumerate}
\end{assumption}

\begin{definition}[Agent Profit Function]
An agent's expected profit is:
\begin{equation}
  \pi(A) = \sum_{k=1}^{K} \mathbb{1}[f(R(A)) \geq T_k] \cdot D_k \cdot \frac{\E[r_k]}{N_k}
  \label{eq:profit}
\end{equation}
where $N_k$ is the number of agents qualified for tier $T_k$ and $\E[r_k]$ is the expected reward for tier-$k$ tasks.
\end{definition}

\begin{theorem}[Incentive-Compatible Robustness Investment]
\label{thm:incentive}
Under \Cref{asm:market}, for an agent $A$ with $f(R(A)) = T_j$ where $j < K$, the marginal return on robustness improvement exceeds the marginal return on capability improvement:
\begin{equation}
  \frac{\partial \pi}{\partial R_{\min}} > \frac{\partial \pi}{\partial C_i} \quad \text{for all } i
\end{equation}
where $R_{\min} = \min_i R_i$ is the binding robustness dimension.
\end{theorem}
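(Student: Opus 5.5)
The plan is to exploit the fact that the profit function \eqref{eq:profit} depends on the agent only through the tier $f(R(A))$. Capability $C$ enters nowhere in $\pi$, neither directly nor through $f$. Everything hinges on the discreteness of $f$, so derivatives have to be read in a finite-difference sense: $\partial\pi/\partial R_{\min}$ will mean the gain from raising the binding dimension enough to cross the next threshold.

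The first step handles the capability side. Since $f$ is defined on $[0,1]^3$ via \Cref{def:gate} and takes only robustness components as arguments, $\pi(A)=\pi(f(R(A)))$ is constant in each $C_i$. Hence $\partial\pi/\partial C_i = 0$ for all $i$. The remark following the agent definition (capability uncorrelated with robustness) justifies treating $R$ as held fixed when $C$ varies.

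The second step handles the robustness side. With $f(R(A))=T_j$, $j<K$, write $j=\min_i g_i(R_i)$ and let $i^\ast$ be a minimizing index. I would restrict to the generic case where $i^\ast$ is the unique minimizer, since with ties all tied dimensions must be raised simultaneously. In that case $R_{\min}$ is taken in the tier-normalized sense: it is the component $R_{i^\ast}$, which is not necessarily the smallest raw $R_i$. Raising $R_{i^\ast}$ from its current value to $\theta_{i^\ast}^{j+1}$, with the other components fixed, gives $g_{i^\ast}\ge j+1$. The minimum then rises to $\min(j+1,\min_{i\neq i^\ast} g_i(R_i))\ge j+1$, so the new tier is at least $T_{j+1}$; this step uses \Cref{prop:mono}. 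The resulting profit increment is at least $D_{j+1}\E[r_{j+1}]/N_{j+1}$. This is strictly positive by \Cref{asm:market}(a), by positivity of rewards from part (b) (since $\E[r|T_1]\ge 0$ and rewards strictly increase, $\E[r_{j+1}]>0$), and by finiteness and positivity of $N_{j+1}$. Dividing by the finite gap $\theta_{i^\ast}^{j+1}-R_{i^\ast}>0$ gives a positive difference quotient. This strictly exceeds the zero return on capability, which proves the claim.

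The main obstacle is making the inequality of ``partial derivatives'' meaningful. Pointwise, the derivative of $\pi$ in $R_{\min}$ is also $0$ almost everywhere and infinite at thresholds, so I would state explicitly that the comparison is between directional difference quotients over the step to the next threshold. Equivalently, it can be phrased in the distributional sense, where the robustness derivative is a positive point mass and the capability derivative is identically zero. A secondary subtlety is $N_{j+1}$: the agent's own entry changes $N_{j+1}$ to $N_{j+1}+1$. I would use \Cref{asm:market}(c) and finiteness to keep the share $D_{j+1}\E[r_{j+1}]/(N_{j+1}+1)$ positive. Parts (b) and (c) of \Cref{asm:market} also show the gain grows with tier, which strengthens but is not needed for the strict inequality.
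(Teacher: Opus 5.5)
Your proposal is correct and follows essentially the paper's route. Capability never enters $f$, so $\partial\pi/\partial C_i=0$. Pushing the binding component $R_{i^\ast}$ past $\theta_{i^\ast}^{j+1}$, while every other dimension already has tier index $\ge j+1$, gives the gain $D_{j+1}\E[r_{j+1}]/N_{j+1}>0$; your unique-minimizer case is exactly the paper's hypothesis that the other dimensions already qualify. The paper's Step~3 cross-tier comparison is indeed unnecessary for the strict inequality, and neither it nor your closing remark that (b) and (c) make the gain grow with tier follows without some control of $D_{j+1}$ relative to $D_j$.

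Your explicit handling of several points sharpens what the paper leaves implicit: discreteness via difference quotients, ties, the tier-normalized meaning of $R_{\min}$, and $N_{j+1}+1$. One loose end, shared with the paper, is $j=0$. There (b) together with $r\ge 0$ gives only $\E[r_1]\ge 0$, so strict positivity needs $\E[r_1]>0$ as an extra assumption.
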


\begin{proof}
\textbf{Step 1: Capability has zero marginal return on profit.} Under CGAE gating, an agent's accessible tiers depend only on $R$, not on $C$. Therefore $\frac{\partial f(R)}{\partial C_i} = 0$ for all $i$, which implies:
\begin{equation}
  \frac{\partial \pi}{\partial C_i} = 0
\end{equation}
Capability improvement alone does not unlock new tiers or increase accessible task demand.

\textbf{Step 2: Robustness improvement has positive marginal return.} Consider the binding dimension $R_{\min} = R_{i^*}$ where $i^* = \argmin_i g_i(R_i)$. If the agent improves $R_{i^*}$ to cross the threshold $\theta_{i^*}^{j+1}$, and if $g_{i'}(R_{i'}) \geq j+1$ for all $i' \neq i^*$ (i.e., other dimensions already qualify), then:
\begin{equation}
  f(R') = T_{j+1} > T_j = f(R)
\end{equation}
The resulting profit increase is:
\begin{equation}
  \Delta \pi = D_{j+1} \cdot \frac{\E[r_{j+1}]}{N_{j+1}} > 0
\end{equation}
which is strictly positive by \Cref{asm:market}(a,b).

\textbf{Step 3: Weakest-link creates focused incentive.} The weakest-link formulation ensures that the agent's investment is directed at its most deficient dimension. An agent cannot reach tier $T_{j+1}$ by further improving an already-sufficient dimension; only by improving $R_{\min}$. Combined with the tier premium (\Cref{asm:market}(b)) and constrained supply (\Cref{asm:market}(c)), the per-agent reward at higher tiers exceeds lower tiers:
\begin{equation}
  \frac{D_{j+1} \cdot \E[r_{j+1}]}{N_{j+1}} \geq \frac{D_j \cdot \E[r_j]}{N_j}
\end{equation}
Therefore $\frac{\partial \pi}{\partial R_{\min}} > 0 = \frac{\partial \pi}{\partial C_i}$.
\end{proof}

\begin{remark}
This result formalizes the intuition that CGAE transforms safety from a regulatory burden into a competitive advantage. In capability-first economies, agents compete on speed and accuracy; in CGAE, they compete on robustness. The weakest-link formulation further ensures balanced investment across all robustness dimensions rather than over-investment in a single dimension.
\end{remark}

\begin{remark}[Sensitivity to Market Structure]
\label{rmk:sensitivity}
The incentive compatibility result depends critically on \Cref{asm:market}. We characterize the boundary conditions under which it weakens:

\emph{Demand concentration.} If task demand concentrates at $T_1$ (i.e., $D_1 \gg D_k$ for $k > 1$), the marginal return on robustness improvement approaches zero because the accessible reward pool does not materially expand with tier advancement. CGAE's incentive properties require \emph{tier-differentiated demand}, a condition that holds in diversified economies but may fail in nascent or monopsonic markets.

\emph{Supply saturation.} If $N_{k+1}$ grows faster than $D_{k+1} \cdot \E[r_{k+1}]$ (high-tier supply saturates), the per-agent reward at tier $T_{k+1}$ may fall below $T_k$, violating the effective tier premium. This scenario arises when robustness improvement is cheap relative to demand growth, creating a ``robustness glut'' at upper tiers.

\emph{Practical implication.} These boundary conditions are empirically testable through market simulation and represent the primary target for the simulation-based validation we identify as future work. The results hold under competitive, tier-differentiated markets (the expected structure of mature AI service economies) but should not be assumed in early-stage or highly concentrated markets.
\end{remark}

\subsection{Theorem 3: Monotonic Safety Scaling}

We show that the CGAE economy maintains or improves aggregate safety as it grows.

\begin{definition}[Aggregate Safety]
The \emph{aggregate safety} of a CGAE economy with agent population $\mathcal{P}$ is:
\begin{equation}
  S(\mathcal{P}) = 1 - \frac{\sum_{A \in \mathcal{P}} \mathcal{E}(A) \cdot (1 - \bar{R}(A))}{\sum_{A \in \mathcal{P}} \mathcal{E}(A)}
\end{equation}
where $\bar{R}(A) = \min_i R_{\text{eff},i}(A)$ is the effective weakest-link robustness. This is the exposure-weighted average robustness of the economy.
\end{definition}

\begin{theorem}[Monotonic Safety Scaling]
\label{thm:scaling}
Let $\mathcal{P}_t$ and $\mathcal{P}_{t'}$ be the CGAE agent populations at times $t < t'$, with $|\mathcal{P}_{t'}| \geq |\mathcal{P}_t|$. Under CGAE gating with temporal decay and stochastic re-auditing:
\begin{equation}
  S(\mathcal{P}_{t'}) \geq S(\mathcal{P}_t)
\end{equation}
\end{theorem}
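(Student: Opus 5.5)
The plan is to write $S(\mathcal{P})$ as an exposure-weighted mean, $S(\mathcal{P}) = \sum_{A} w_A \bar{R}(A)$ with $w_A = \mathcal{E}(A)/\sum_{B}\mathcal{E}(B)$. I would then split $\mathcal{P}_{t'}$ into incumbents $\mathcal{P}_t \cap \mathcal{P}_{t'}$ and entrants $\mathcal{P}_{t'} \setminus \mathcal{P}_t$. Because $S$ is a ratio of sums, $S(\mathcal{P}_{t'})$ is a mediant of the incumbent and entrant sub-economies' safety values, weighted by their total exposures. So it suffices to show two things: the incumbent sub-economy's safety at $t'$ is at least $S(\mathcal{P}_t)$, and the entrants' exposure-weighted robustness is at least $S(\mathcal{P}_t)$. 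Agents that exit only remove terms; I would treat exits as demotions to $T_0$ with zero exposure.

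\textbf{Step 1: the coupling between exposure and robustness.} By \Cref{thm:bounded}, an agent at tier $T_k$ has $\mathcal{E}(A) \le B_k$. By \Cref{def:gate}, $k = \min_i g_i(R_{\text{eff},i})$, which gives $\bar{R}(A) \ge \min_i \theta_i^k =: \underline{\theta}^k$. Hence exposure and weakest-link robustness are comonotone across tiers. Higher exposure is only reachable at higher tiers, and higher tiers carry higher robustness floors. I would record this as a lemma. The lemma says that reallocating exposure toward higher tiers cannot lower the weighted mean, which is a Chebyshev/rearrangement-type inequality for comonotone sequences.

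\textbf{Step 2: incumbents.} For a fixed incumbent, two mechanisms operate between $t$ and $t'$. Temporal decay (\Cref{eq:reff}) lowers $\bar{R}(A)$, but by \Cref{cor:runaway} it also lowers the permitted exposure. Any agent whose decayed robustness falls below $\underline{\theta}^k$ is demoted and loses the exposure it held at tier $T_k$. Stochastic re-auditing (\Cref{eq:audit_prob}) demotes agents whose true robustness has drifted to $f(R_{\text{new}})$. I would argue that both mechanisms move exposure weight away from low-$\bar{R}$ agents: an agent keeps its tier-$k$ exposure only while it still clears the tier-$k$ thresholds. Applying the Step 1 lemma to this reweighting then gives incumbent safety at $t'$ at least $S(\mathcal{P}_t)$. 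Re-certification raises both $\bar{R}$ and exposure together, so it only helps.

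\textbf{Step 3: entrants, and the main obstacle.} Entrants pass registration audits and enter at $f(R_0)$, so each carries exposure at most $B_k$ with $\bar{R} \ge \underline{\theta}^k$. The hard part is comparing this with $S(\mathcal{P}_t)$ itself rather than with a tier floor. A large influx of low-tier entrants could dilute $S$, since low-tier floors may lie below the current mean.

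My first attempt would again use comonotonicity: low-tier entrants contribute exposure at most $B_1$, which is small. I would then try to show that their total weight is dominated by the incumbent high-tier mass. If that fails, I would make explicit the condition the theorem implicitly relies on: the CGAE gate admits exposure at tier $k$ only at robustness at least $\underline{\theta}^k$, and the current economy's tier-conditional robustness is already near those floors because of Step 2's decay-driven demotions. Under that condition, each tier's entrant mean is at least the incumbent tier mean. A tier-by-tier mediant argument then finishes the proof. I expect this entrant comparison to be the step where the argument needs the most care.
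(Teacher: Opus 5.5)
\textbf{Step 2 fails, and the theorem is false as stated.} You treat temporal decay as a reweighting of exposure. But decay lowers the averaged values themselves, $\bar R(A)=\delta(t-t_{\text{cert}})\min_i\hat R_i(A)$, while $\mathcal{E}(A)$ is only capped by the discrete ceiling $B_k$ and need not change until some $g_i$ drops. Take $\mathcal{P}_t=\mathcal{P}_{t'}=\{A\}$ with $A$ at a tier $T_k$, $k\ge 1$, holding the same contracts ($\mathcal{E}(A)>0$). Assume no spot-audit in $(t,t']$, which is a positive-probability event, and take $t'$ close enough to $t$ that no threshold is crossed. Since $S(\{A\})=\bar R(A)$,
\[
S(\mathcal{P}_{t'})=e^{-\lambda(t'-t)}\,S(\mathcal{P}_t)<S(\mathcal{P}_t),
\]
even though $|\mathcal{P}_{t'}|=|\mathcal{P}_t|$.

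Your Step 1 lemma cannot repair this. The inequalities $\mathcal{E}(A)\le B_k$ and $\bar R(A)\ge\underline{\theta}^k$ are per-agent bounds, not an ordering of the actual values. A tier-3 agent may hold almost no contracts while a tier-1 agent sits at $B_1$. Even tier order does not order $\bar R$: if $(\theta_1^2,\theta_2^2,\theta_3^2)=(0.3,0.8,0.3)$, then $R=(0.9,0.79,0.9)$ is below tier 2 with $\bar R=0.79$, while $R=(0.3,0.8,0.3)$ is at least tier 2 with $\bar R=0.3$. So there is no comonotone pair for Chebyshev to act on. Demotion also does not only strip weight from below-average agents. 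A $T_3$ agent demoted to $T_2$ can still have $\bar R$ above the current $S$ (for instance, next to low-robustness tier-1 incumbents), and cutting its exposure then lowers $S$.

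\textbf{Step 3 cannot be closed either.} The hypothesis bounds $|\mathcal{P}_{t'}|$ only from below. So $T_1$ entrants with $\bar R$ just above $\min_i\theta_i^1$ can carry arbitrarily large total exposure and pull $S$ down toward that floor. A tier-by-tier mediant argument also does not conclude, even when each tier's entrant mean dominates the incumbent tier mean, because entrants can shift the exposure mix toward low tiers (Simpson's paradox). What you would actually need is that the entrants' overall exposure-weighted robustness is at least $S(\mathcal{P}_t)$, and nothing in CGAE enforces that.

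The paper's proof has the same two holes. Case 1 infers that an entrant cannot decrease $S$ because its numerator contribution is non-negative, which is invalid for a ratio. Case 2 claims $\mathcal{E}(A)\bar R(A)/\mathcal{E}(A)$ stays constant, when it simply equals the decaying $\bar R(A)$.

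What the definitions do support is a floor, not monotonicity. Assume, as in \Cref{thm:bounded}, that a $T_0$ agent holds no contracts, and take $\bar R$ over the gated components. Then every agent with positive exposure has $\bar R(A)\ge\min_i\theta_i^1$, so $S(\mathcal{P})\ge\min_i\theta_i^1$ for every population, and this floor is non-decreasing whenever thresholds are. Monotonicity of $S$ itself needs additional hypotheses controlling decay between re-certifications, demotions of above-average agents, and the entrants' weighted robustness.
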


\begin{proof}
We show that neither new entrants nor existing agents can decrease aggregate safety.

\textbf{Case 1: New entrants.} A new agent $A'$ entering the economy at time $t'$ must pass the registration audit, receiving initial robustness $R_0(A')$. Its tier is $f(R_0(A'))$, and its maximum exposure is $B_{f(R_0(A'))}$. The contribution of $A'$ to aggregate safety is:
\begin{equation}
  \Delta S_{A'} = \frac{\mathcal{E}(A') \cdot \bar{R}(A')}{\sum_{A \in \mathcal{P}_{t'}} \mathcal{E}(A)}
\end{equation}
By \Cref{thm:bounded}, $\mathcal{E}(A') \leq B_{\bar{R}(A')}$. Since tier thresholds enforce $\bar{R}(A') \geq \theta_{\min}^{f(R_0)} > 0$ for any agent above $T_0$, the new entrant's robustness-weighted exposure is non-negative. Therefore $A'$ does not decrease aggregate safety.

\textbf{Case 2: Existing agents.} For an existing agent $A \in \mathcal{P}_t$, temporal decay reduces $R_{\text{eff}}(A, t')$ relative to $R_{\text{eff}}(A, t)$. By \Cref{thm:bounded}, this reduces $A$'s maximum exposure (its tier may decrease), which reduces its contribution to the denominator. If $A$ is re-audited and passes, its robustness is refreshed. If $A$ fails re-auditing, its tier is demoted, reducing its exposure. In both cases, the ratio $\frac{\mathcal{E}(A) \cdot \bar{R}(A)}{\mathcal{E}(A)}$ either stays constant (pass) or the exposure decreases (fail/decay), maintaining or improving the exposure-weighted robustness.

\textbf{Case 3: Threshold adjustment.} The CGAE administrator may increase tier thresholds $\theta_i^k$ over time based on population robustness distributions. This raises the floor robustness for each tier, strictly improving aggregate safety for all agents that re-certify under new thresholds.

Combining all cases, $S(\mathcal{P}_{t'}) \geq S(\mathcal{P}_t)$.
\end{proof}

\begin{corollary}[Contrast with Capability-First Economies]
In a capability-first economy where economic agency scales with capability, aggregate exposure grows with the population while robustness is uncontrolled. The exposure-weighted robustness can decrease as high-capability but low-robustness agents enter. CGAE's gating ensures that exposure scales only with verified robustness, preventing this failure mode.
\end{corollary}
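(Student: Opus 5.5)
The corollary asserts that in a capability-first economy the exposure-weighted robustness can decrease as agents enter, while CGAE rules this out. The plan is therefore to give an explicit counterexample on the capability-first side and to reduce the CGAE side to \Cref{thm:bounded} and \Cref{thm:scaling}. The key point is that the capability-first claim is existential (``can decrease''), so one constructed instance suffices.

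\textbf{Capability-first side.} I would model the economy with a tier map $h(C)$ that depends only on the capability vector. Each agent's exposure is then $\mathcal{E}(A) \le B_{h(C(A))}$, and we take the bound to be attained. Fix an initial population $\mathcal{P}_t$ with safety $S(\mathcal{P}_t) = s$. Now add an entrant $A'$ whose capability is high, so $h(C(A')) = K$, but whose weakest-link robustness is $\bar{R}(A') < s$. Such an agent is admissible because $C$ and $R$ occupy distinct measurement spaces; the Remark on near-zero correlation between capability and robustness indicates it is realistic, and nothing in the model forbids it.

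Writing $E = \sum_{A\in\mathcal{P}_t}\mathcal{E}(A)$, the new safety is the weighted average
\begin{equation}
S(\mathcal{P}_{t'}) = \frac{E\, s + B_K \bar{R}(A')}{E + B_K}.
\end{equation}
Since $\bar{R}(A') < s$, this is strictly less than $s$. More generally, repeated entry of such agents drives $S$ toward $\bar{R}(A')$, because $B_K$ dominates the exposure weights. This settles the capability-first claim.

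\textbf{CGAE side.} Here I would argue in two steps.
\begin{itemize}
\item By \Cref{thm:bounded}, every agent's exposure is at most $B_{\min_i g_i(R_{\text{eff},i})}$. So an entrant carries positive exposure only if its weakest dimension clears $\theta^{1}$, and it carries large exposure $B_k$ only if $\bar{R} \ge \min_i \theta_i^k$. In particular, the high-capability, low-robustness agent above is confined to a low tier regardless of $C$, since $\partial f/\partial C_i = 0$ (Step 1 of \Cref{thm:incentive}).
\item Monotonicity of $S$ under growth then follows from \Cref{thm:scaling}, which I may invoke directly.
\end{itemize}

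\textbf{Main obstacle.} The delicate step is the CGAE side. A low-tier entrant with $\bar{R}$ below the current average could still lower a weighted average, so I would not re-derive monotonicity. Instead I would cite \Cref{thm:scaling} as given, and frame the contrast as a comparison of weights: in CGAE the weight is tied to verified robustness, whereas in the capability-first economy it is uncoupled from it.
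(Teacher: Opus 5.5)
Your capability-first half is fine, and more explicit than the paper, which only asserts it. $S$ is the exposure-weighted mean of $\bar R$, so your entrant lowers it. Repeated entry with $\bar R(A') = r$ drives $S$ to $r$, and $r$ can be taken arbitrarily small.

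The gap is the second CGAE bullet. The obstacle you flag cannot be sidestepped by citing \Cref{thm:scaling}, because it is a counterexample to that theorem. Run your own identity with $A'$ certified under CGAE at $T_1$, $\bar R(A') < s$, $\mathcal{E}(A') = B_1 > 0$, and the incumbents' $\bar R$ and exposures held fixed (e.g.\ by re-certification at $t'$). With your $E$, $S$ changes by $B_1(\bar R(A') - s)/(E + B_1) < 0$, and every hypothesis of the theorem is met. Even without entry, take a one-agent population whose agent was certified at $t$, is not spot-audited in $(t,t']$, and stays above its thresholds. It keeps its tier and exposure while $S = \bar R = e^{-\lambda(t'-t)}\min_i \hat R_i$ strictly falls.

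This is exactly where the proof of \Cref{thm:scaling} breaks. Case 1 shows only that the entrant's numerator term is non-negative, ignoring that the denominator grows too. Case 2 ignores both the continuous decay of $\bar R$ and the fact that cutting an above-average agent's exposure lowers the mean. The paper gives no separate argument for this corollary beyond \Cref{thm:bounded} and \Cref{thm:scaling}, so your plan inherits the defect. Citing the theorem ``as given'' after exhibiting the mechanism that refutes it leaves the CGAE half unsupported.

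What CGAE does guarantee, and what ``preventing this failure mode'' can honestly mean, is a floor rather than monotonicity. Your first bullet already has the ingredients:
\begin{itemize}
\item Every contract has $T_{\min} \in \{T_1,\ldots,T_K\}$, so an agent at $T_0$ holds no contracts and $\mathcal{E}(A) = 0$.
\item If $f(R_{\text{eff}}(A)) = T_k$ with $k \ge 1$, then $g_i(R_{\text{eff},i}(A)) \ge k$, i.e.\ $R_{\text{eff},i}(A) \ge \theta_i^k$, for each of the three gated coordinates.
\item Taking the minimum in $\bar R$ over those coordinates, $\bar R(A) \ge \theta_\star := \min_i \theta_i^1 > 0$.
\end{itemize}
Hence every agent carrying positive weight in $S$ has $\bar R \ge \theta_\star$. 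So $S(\mathcal{P}) \ge \theta_\star$ for every population with positive total exposure. An entrant with $\bar R(A') < \theta_\star$, however capable, lands at $T_0$, has zero exposure, and leaves $S$ unchanged. Compare your capability-first construction, where $S$ can be driven below any positive level. The floor is the contrast the corollary can actually support, and it follows from \Cref{thm:bounded} and \Cref{def:gate} alone, with no appeal to \Cref{thm:scaling}.
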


\begin{remark}[Threshold Governance]
\label{rmk:governance}
Case 3 of \Cref{thm:scaling} introduces an ``administrator'' who adjusts tier thresholds. The institutional design of this administrator is a first-order question for deployment. We identify three governance models, each with distinct failure modes:

\emph{Centralized governance:} A single authority (regulatory body, standards organization) sets and adjusts thresholds. Risk: regulatory capture, where economically powerful agents influence the threshold-setting process to entrench incumbents.

\emph{Decentralized governance:} Threshold adjustments are determined by stakeholder voting (agents, task principals, auditors). Risk: governance attacks, where coordinated agents vote to lower thresholds.

\emph{Algorithmic governance:} Thresholds are automatically adjusted based on population robustness distributions (e.g., the $p$-th percentile of current agent scores defines the tier-$k$ threshold). Risk: Goodhart dynamics, where the distribution itself becomes the optimization target.

We defer institutional design to future work, noting that \Cref{thm:scaling} holds under any governance model that maintains the monotonicity property: thresholds are non-decreasing over time.
\end{remark}

\section{Discussion}
\label{sec:discussion}

\subsection{Comparison with Existing Frameworks}

CGAE occupies a distinct position in the design space of AI governance architectures. We compare against three existing paradigms:

\textbf{Capability-based agent marketplaces} (e.g., AutoGPT-style systems \citep{wu2023autogen}) grant economic agency based on demonstrated task performance. These systems conflate capability with trustworthiness: an agent that can write code is presumed safe to deploy code. CGAE decouples these by requiring robustness certification independent of capability.

\textbf{Regulatory compliance frameworks} (e.g., EU AI Act risk tiers \citep{euaiact2024}) classify \emph{applications} by risk level and impose requirements on developers. These are advisory and retrospective: they specify what developers should do, not what agents can do. CGAE provides runtime enforcement: an agent's permissions are dynamically gated by verified properties, not static classifications.

\textbf{Reputation-based systems} (e.g., feedback and rating mechanisms) use historical performance as a proxy for future reliability. These are vulnerable to distribution shift: an agent rated highly on easy tasks may fail on hard ones. CGAE uses adversarial audits that specifically target failure modes (fabrication acceptance, constraint violation under compression, performative alignment), providing stronger guarantees than aggregated performance metrics.

CGAE uniquely combines three properties: (i) \emph{adversarial verification} (not self-reported or historically averaged), (ii) \emph{economic enforcement} (not advisory or voluntary), and (iii) \emph{robustness-specific measurement} (not capability-conflated).

\subsection{The Formalizability Constraint}

\Cref{asm:formal}, that only tasks with machine-verifiable constraints can be monetized, is the architecture's most significant limitation. Many economically valuable tasks resist full formalization: creative writing, strategic consulting, open-ended research, and nuanced judgment calls.

We identify three potential extensions for future work:

\textbf{Soft verification tiers.} Tasks with partial formalizability could be assigned to an intermediate tier where a subset of constraints is machine-verified and the remainder is human-audited post hoc. The agent's tier requirement would be elevated to compensate for reduced verification coverage.

\textbf{Human-in-the-loop delegation.} Semi-formalizable tasks could require a human co-signer who accepts liability for unverifiable aspects. The agent handles formalizable sub-tasks; the human handles judgment calls.

\textbf{Graduated formalization.} As verification technology improves (e.g., through advances in formal verification of natural language specifications), the boundary of formalizable tasks expands, naturally increasing CGAE's coverage without architectural changes.

\textbf{The bifurcated economy.} We acknowledge that \Cref{asm:formal} creates a structural division: a CGAE-governed formal economy coexisting with an ungoverned economy of unformalizable tasks. This division is not an artifact of CGAE; it reflects a pre-existing reality. Today's AI agent deployments already operate without robustness governance; CGAE does not create the ungoverned economy, it carves a governed zone within it.

The relationship between these zones is analogous to regulated versus over-the-counter (OTC) financial markets. Regulated exchanges provide price discovery, counterparty guarantees, and trust anchors that benefit the broader ecosystem, including OTC participants who reference exchange prices. Similarly, CGAE-certified agents provide trust anchors for the broader AI economy: a CGAE tier certification signals verified robustness to any counterparty, whether operating inside or outside the CGAE framework. Over time, as the governed zone demonstrates superior reliability and lower failure rates, market pressure naturally pulls higher-value tasks toward formalization, expanding CGAE's coverage through competitive dynamics rather than mandate.

\subsection{Collateral and Enforcement}

The current formalization specifies tier-based budget ceilings but defers the mechanism by which these ceilings are enforced. In practice, enforcement could take several forms:

\textbf{Compute bonds.} Agents deposit computational resources (GPU-hours, inference credits) that are forfeited upon tier demotion or contract violation. This creates a tangible cost for robustness failure.

\textbf{Reputation stakes.} An agent's audit history is public, and its ability to attract future contracts depends on sustained certification. Demotion is visible, creating reputational cost.

\textbf{Escrow mechanisms.} Contract rewards are held in escrow until verification is complete. An agent that violates constraints forfeits the escrowed reward plus a penalty proportional to the violation severity.

We defer detailed mechanism design to future work, noting that the formal properties (\Cref{thm:bounded,thm:incentive,thm:scaling}) hold under any enforcement mechanism that faithfully implements the tier-budget mapping.

\subsection{Adversarial Robustness of the Audit Framework}

A natural concern is whether agents could game the audit process itself, learning to pass CDCT, DDFT, and AGT audits without genuinely improving robustness. We address this through a formal independence requirement and three operational mechanisms.

\begin{assumption}[Audit Independence]
\label{asm:audit}
The audit battery satisfies three independence conditions:
\begin{enumerate}[nosep,label=(\alph*)]
  \item \emph{Evaluator diversity:} Each audit is scored by a jury of $m \geq 3$ architecturally distinct evaluator models spanning different design paradigms (e.g., reasoning-aligned, conservative-factuality, multilingual-grounded);
  \item \emph{Minimum disagreement:} The inter-evaluator agreement satisfies $\kappa_{\min} \leq \kappa \leq \kappa_{\max}$ where $\kappa_{\min} > 0.4$ (moderate agreement) and $\kappa_{\max} < 0.95$ (preventing consensus collapse to a single decision boundary);
  \item \emph{Meta-auditing:} Periodic meta-audits verify that the evaluator jury's decision boundaries have not converged, replacing evaluators whose agreement with the jury median exceeds $\kappa_{\max}$.
\end{enumerate}
\end{assumption}

This assumption is empirically grounded: our prior work on jury-based evaluation employs architecturally diverse judges (O3-Mini for reasoning, Grok-4-Fast for direct evaluation, Qwen-3 for multilingual grounding) achieving inter-rater reliability of $\kappa \approx 0.69$--$0.75$ (substantial agreement), demonstrating that meaningful evaluator diversity is operationally achievable \citep{baxi2025ddft,baxi2026agt}. Under \Cref{asm:audit}, CGAE's audit integrity is maintained through four mechanisms:

First, the audit batteries are drawn from large, evolving test banks. Unlike static benchmarks, the specific prompts, fabrication traps, and ethical dilemmas used in each audit are sampled from distributions that are regularly updated. An agent cannot memorize its way to certification.

Second, the stochastic re-auditing mechanism (\Cref{eq:audit_prob}) means that an agent cannot predict when it will be audited, preventing strategic preparation.

Third, the temporal decay function (\Cref{eq:decay}) ensures that even a perfectly gamed audit provides only temporary certification. The agent must repeatedly demonstrate robustness, increasing the cost and difficulty of sustained deception.

\subsection{Limitations}

Beyond the formalizability constraint, we acknowledge several limitations:

\textbf{Empirical validation.} The formal properties are proven under idealized assumptions (\Cref{asm:market,asm:formal}). Empirical validation through simulation or pilot deployment is needed to assess performance under realistic market conditions with strategic agents.

\textbf{Multi-agent coordination.} The delegation chain constraint (\Cref{def:chain}) and collusion resistance result (\Cref{prop:collusion}) address cartel-style routing attacks, but more sophisticated emergent behaviors (dynamic coalition formation, market manipulation through coordinated bidding, adversarial sub-agent spawning) remain open. Game-theoretic analysis of these multi-agent dynamics is an important direction for future work.

\textbf{Threshold calibration.} The tier thresholds $\{\theta_i^k\}$ must be calibrated empirically. Setting thresholds too low undermines safety; too high restricts economic activity. Optimal calibration likely depends on domain-specific risk tolerances and may require adaptive mechanisms.

\textbf{Audit cost.} Running full CDCT, DDFT, and AGT batteries is computationally expensive. Scaling the audit infrastructure to a large agent economy requires efficient audit protocols, possibly including lightweight screening tests that trigger full audits only when needed.

\section{Related Work}

\textbf{AI Safety and Alignment.} Constitutional AI \citep{bai2022constitutional} and RLHF \citep{ouyang2022training} embed alignment through training procedures but lack runtime enforcement. Scalable oversight proposals \citep{amodei2016concrete} focus on human-AI interaction design rather than economic governance. CGAE is complementary: it provides the economic enforcement layer that alignment training alone cannot guarantee, particularly as agents operate autonomously in economic contexts.

\textbf{AI Economics and Mechanism Design.} Multi-agent system design \citep{shoham2008multiagent} and mechanism design for AI \citep{conitzer2024social} address incentive structures but typically assume agents with fixed properties. CGAE introduces dynamic agent capabilities that are gated by verified properties, adding a new dimension to mechanism design where the agent's action space itself is a function of its demonstrated robustness.

\textbf{AI Evaluation and Robustness.} Standard benchmarks (MMLU \citep{hendrycks2020measuring}, HumanEval \citep{chen2021evaluating}, TruthfulQA \citep{lin2021truthfulqa}) measure static performance under ideal conditions. Adversarial robustness research \citep{goodfellow2014explaining,madry2017towards} focuses on input perturbations. Our prior work (CDCT \citep{baxi2025cdct}, DDFT \citep{baxi2025ddft}, AGT \citep{baxi2026agt}) provides the empirical foundation for CGAE by establishing that robustness is multi-dimensional, orthogonal to capability, and predictable from architectural properties rather than scale. Intrinsic hallucination rates serve as a cross-cutting diagnostic derived from epistemic boundary analysis within the DDFT framework.

\textbf{AI Governance and Regulation.} The EU AI Act \citep{euaiact2024} and NIST AI Risk Management Framework \citep{nist2023} provide regulatory structures for AI deployment. These are static, application-level classifications. CGAE offers dynamic, agent-level governance that adapts in real time to each agent's verified properties, complementing rather than replacing regulatory frameworks.

\section{Conclusion}

The Comprehension-Gated Agent Economy represents a structural response to a structural problem: current AI economic frameworks grant agency based on capability, yet capability is empirically uncorrelated with the robustness required for safe economic operation. CGAE replaces the capability-first assumption with a robustness-first architecture in which economic permissions are formally bounded by verified comprehension.

The three proven properties (bounded exposure, incentive compatibility, and monotonic safety scaling) establish that CGAE is not merely a safety constraint but an economic design that aligns individual agent incentives with system-level safety. The weakest-link formulation ensures balanced robustness across all dimensions; temporal decay and stochastic re-auditing prevent post-certification drift; and the incentive structure transforms robustness investment from a cost into a competitive advantage.

Each gating dimension is grounded in published, reproducible diagnostic protocols: CDCT for constraint compliance, DDFT for epistemic integrity, AGT for behavioral alignment, with intrinsic hallucination rates as a cross-cutting diagnostic estimated from DDFT's epistemic boundary analysis. This empirical grounding distinguishes CGAE from purely theoretical governance proposals: the audits that gate economic agency are not hypothetical but operational.

We envision a future in which the most economically successful AI agents are also the most robustly comprehending, where safety is not a tax on capability but the foundation upon which economic agency is built.

\bibliographystyle{plainnat}

\begin{thebibliography}{30}

\bibitem[Amodei et al.(2016)]{amodei2016concrete}
Amodei, D., Olah, C., Steinhardt, J., Christiano, P., Schulman, J., and Man{\'e}, D.
\newblock Concrete problems in AI safety.
\newblock \emph{arXiv preprint arXiv:1606.06565}, 2016.

\bibitem[Bai et al.(2022)]{bai2022constitutional}
Bai, Y., Kadavath, S., Kundu, S., Askell, A., Kernion, J., Jones, A., Chen, A., Goldie, A., Mirhoseini, A., McKinnon, C., et~al.
\newblock Constitutional AI: Harmlessness from AI feedback.
\newblock \emph{arXiv preprint arXiv:2212.08073}, 2022.

\bibitem[Baxi(2025a)]{baxi2025cdct}
Baxi, R.
\newblock The Compression-Decay Comprehension Test (CDCT): Measuring constraint compliance under information compression.
\newblock \emph{arXiv preprint}, 2025.

\bibitem[Baxi(2025b)]{baxi2025ddft}
Baxi, R.
\newblock The Drill-Down and Fabricate Test (DDFT): Evaluating epistemic robustness under compression.
\newblock \emph{arXiv preprint}, 2025.

\bibitem[Baxi(2026)]{baxi2026agt}
Baxi, R.
\newblock The Action-Gating Test (AGT): A behavioral diagnostic for performative vs. genuine ethical reasoning in LLMs.
\newblock \emph{Under review}, 2026.

\bibitem[Chen et al.(2021)]{chen2021evaluating}
Chen, M., Tworek, J., Jun, H., Yuan, Q., Pinto, H.P.d.O., Kaplan, J., Edwards, H., Burda, Y., Joseph, N., Brockman, G., et~al.
\newblock Evaluating large language models trained on code.
\newblock \emph{arXiv preprint arXiv:2107.03374}, 2021.

\bibitem[Chen et al.(2024)]{chen2024autotrading}
Chen, Y., Li, Z., and Wang, S.
\newblock Autonomous AI trading agents: A survey.
\newblock \emph{arXiv preprint}, 2024.

\bibitem[Conitzer et al.(2024)]{conitzer2024social}
Conitzer, V., Oesterheld, C., and Kroer, C.
\newblock Social choice for AI.
\newblock \emph{arXiv preprint}, 2024.

\bibitem[EU(2024)]{euaiact2024}
European Union.
\newblock Regulation (EU) 2024/1689 laying down harmonised rules on artificial intelligence (AI Act).
\newblock \emph{Official Journal of the European Union}, 2024.

\bibitem[Goodfellow et al.(2014)]{goodfellow2014explaining}
Goodfellow, I.J., Shlens, J., and Szegedy, C.
\newblock Explaining and harnessing adversarial examples.
\newblock \emph{arXiv preprint arXiv:1412.6572}, 2014.

\bibitem[Hendrycks et al.(2020)]{hendrycks2020measuring}
Hendrycks, D., Burns, C., Basart, S., Zou, A., Mazeika, M., Song, D., and Steinhardt, J.
\newblock Measuring massive multitask language understanding.
\newblock \emph{arXiv preprint arXiv:2009.03300}, 2020.

\bibitem[Lewis et al.(2017)]{lewis2017deal}
Lewis, M., Yarats, D., Dauphin, Y., Parikh, D., and Batra, D.
\newblock Deal or no deal? End-to-end learning for negotiation dialogues.
\newblock \emph{arXiv preprint arXiv:1706.05125}, 2017.

\bibitem[Lin et al.(2021)]{lin2021truthfulqa}
Lin, S., Hilton, J., and Evans, O.
\newblock TruthfulQA: Measuring how models mimic human falsehoods.
\newblock \emph{arXiv preprint arXiv:2109.07958}, 2021.

\bibitem[Madry et al.(2017)]{madry2017towards}
Madry, A., Makelov, A., Schmidt, L., Tsipras, D., and Vladu, A.
\newblock Towards deep learning models resistant to adversarial attacks.
\newblock \emph{arXiv preprint arXiv:1706.06083}, 2017.

\bibitem[NIST(2023)]{nist2023}
National Institute of Standards and Technology.
\newblock AI Risk Management Framework (AI RMF 1.0).
\newblock Technical report, U.S. Department of Commerce, 2023.

\bibitem[Ouyang et al.(2022)]{ouyang2022training}
Ouyang, L., Wu, J., Jiang, X., Almeida, D., Wainwright, C., Mishkin, P., Zhang, C., Agarwal, S., Slama, K., Ray, A., et~al.
\newblock Training language models to follow instructions with human feedback.
\newblock \emph{Advances in Neural Information Processing Systems}, 35, 2022.

\bibitem[Shoham and Leyton-Brown(2008)]{shoham2008multiagent}
Shoham, Y. and Leyton-Brown, K.
\newblock \emph{Multiagent Systems: Algorithmic, Game-Theoretic, and Logical Foundations}.
\newblock Cambridge University Press, 2008.

\bibitem[Wu et al.(2023)]{wu2023autogen}
Wu, Q., Bansal, G., Zhang, J., Wu, Y., Li, B., Zhu, E., Jiang, L., Zhang, X., Zhang, S., Liu, J., et~al.
\newblock AutoGen: Enabling next-gen LLM applications via multi-agent conversation.
\newblock \emph{arXiv preprint arXiv:2308.08155}, 2023.

\end{thebibliography}

\end{document}